\newcommand{\bx}{{\mathbf{x}}}
\newcommand{\bbR}{{\mathbb{R}}}
\newcommand{\comment}[1]{}
\def\argmin{\mathop{argmin}\limits}
\def\argmax{\mathop{argmax}\limits}
\def\argmin2{\mathop{argmin}\nolimits}
\def\argmax2{\mathop{\arg\max}\nolimits}
\newtheorem{theorem}{Theorem}
\newtheorem{definition}{Definition}
\DeclareRobustCommand\onedot{\futurelet\@let@token\@onedot}
\def\@onedot{\ifx\@let@token.\else.\null\fi\xspace}
\def\eg{\emph{e.g}\onedot} 
\def\ie{\emph{i.e}\onedot}
\begin{document}
%
\title{Guided Co-training for Large-Scale Multi-View Spectral Clustering}
%
%
%
%

\author{Tyng-Luh Liu \\ Institute of Information Science,
Academia Sinica, Taipei, Taiwan}

\IEEEtitleabstractindextext{%
\begin{abstract}
In many real-world applications, we have access to multiple views of the data, each of which characterizes the data from a distinct aspect. Several previous algorithms have demonstrated that one can achieve better clustering accuracy by integrating information from all views appropriately than using only an individual view. Owing to the effectiveness of spectral clustering, many multi-view clustering methods are based on it. Unfortunately, they have limited applicability to large-scale data due to the high computational complexity of spectral clustering. In this work, we propose a novel multi-view spectral clustering method for large-scale data. Our approach is structured under the guided co-training scheme to fuse distinct views, and uses the sampling technique to accelerate spectral clustering. More specifically, we first select $p$ ($\ll n$) landmark points and then approximate the eigen-decomposition accordingly. The augmented view, which is essential to guided co-training process, can then be quickly determined by our method. The proposed algorithm scales linearly with the number of given data. Extensive experiments have been performed and the results support the advantage of our method for handling the large-scale multi-view situation.
\end{abstract}

\begin{IEEEkeywords}
Co-Training, multi-view spectral clustering, large-scale clustering.
\end{IEEEkeywords}}

\maketitle

\IEEEdisplaynontitleabstractindextext

%
\IEEEpeerreviewmaketitle

%
%
\IEEEraisesectionheading{\section{Introduction}\label{sec:introduction}}

\IEEEPARstart{M}ulti-view clustering has been an important research topic in machine learning, data mining, pattern recognition and many other fields. The technique is especially useful for dealing with practical real-world applications with data collected in various forms or described by different feature representations. For example, in computer vision, it is common to use multiple feature descriptors, such as color and texture histograms, to simultaneously characterize an image so that essential information of different aspects can be better captured. The other critical development pertaining to designing a clustering method is the need to more effectively handle extremely large amount of data. To  address these two challenging issues, we propose a co-training strategy for multi-view spectral clustering that is applicable to the large-scale scenario and produces state-of-the-art results.

The main advantage of working with spectral clustering, \eg, \cite{shi2000normalized}, is the flexibility in accommodating a wide range of geometries, including non-convex patterns, in the data distribution \cite{ng2002spectral,von2007tutorial,filippone2008survey}, which would generally lead to better clustering results than using, say, $k$-means clustering. As a result, quite a number of multi-view extensions are developed based on spectral clustering. A notable group of these approaches can be identified by the core idea to fuse the affinity matrices generated from different views to form a ``universal'' affinity matrix for spectral clustering \cite{cai2011heterogeneous,huang2012affinity,guo2014multiple}. Another direction of multi-view spectral clustering, including \cite{zhang2015low} and \cite{cao2015diversity}, accomplishes the task by uncovering the underlying latent space for all given views using the technique of subspace learning. In addition, the task of multi-view clustering can also be addressed by maximizing the mutual agreement of the provided multiple views. A popular formulation, which is more relevant to our method, is to cast the problem as a co-training/co-regularization framework \cite{kumar2011co,kumar2011cor,wang2014multi,lee2016guided}.

However, spectral clustering does not scale well due to its high computational complexity. For $n$ given data points, spectral clustering first constructs an $n\times n$ affinity matrix and then performs eigen-decomposition on the corresponding Laplacian matrix \cite{chung1997spectral}. If the feature dimension is $d$, then the first step takes $\mathcal{O}(n^2d)$ time, while the cost of eigen-decomposition is $\mathcal{O}(n^3)$. Both of them are unbearable for large-scale data. To alleviate the demanding computation complexity, much effort has been made to accelerate/approximate spectral clustering. One feasible direction is to speed up the eigen-decomposition step. To that end, \cite{lin2010power,boutsidis2013spectral} use the power method and \cite{fowlkes2004spectral} consider the classical Nystr\"{o}m method.
Another promising way is to {\em pre-process} the data by sampling techniques and then perform corresponding operations on the subset of a reduced size. Up to our knowledge, \cite{shinnou2008spectral} is the first algorithm adopting this strategy. Subsequently, a more general framework based on the similar idea is proposed in \cite{yan2009fast}. Other approaches in this category include \cite{sakai2009fast,wang2009approximate,chen2011large,shao2015deep,li2016scalable} and \cite{tremblay2016compressive}.

While the aforementioned attempts to accelerate spectral clustering are mainly designed to handle single-view data, the study in \cite{li2015large} seems to be, so far, the only one focusing on multi-view spectral clustering for large-scale data. In comparison, as we shall describe later, our method is motivated by \cite{lee2016guided}, and aims to further formalize and extend the guided co-training framework to facilitate large-scale multi-view spectral clustering. Similar to the relevant techniques, we sample, say, $p$ landmark points from the dataset by considering information from all views, and then explore the nearest-neighbor and similarity relationships between the original data and the landmark points. The strategy would give rise to a sparse representation matrix, which is essentially a rectangular sub-matrix of the complete affinity matrix of an extremely large size. Based on \cite{lee2016guided}, we propose a new criterion to construct an augmented view and update the affinity matrix of each view accordingly. Once all the original views are adjusted, the whole process is repeated for a certain number of iterations or until converging to a consensus view. Finally, spectral clustering can be readily performed with the resulting affinity matrix associated with the augmented view.

%
\section{Related Work}
\label{sec:related}
%

%
\subsection{Large-Scale Spectral Clustering}
\label{ssec:large-scale}
Most of the large-scale spectral clustering approaches fall into two categories. The first category includes those that are designed to speed up the eigen-decomposition. Since this is  the most time-consuming step of spectral clustering, it makes sense to accelerate the process in dealing with large-scale data. \cite{fowlkes2004spectral} utilize the classical Nystr\"{o}m method to approximate the eigenvectors. In \cite{lin2010power}, an efficient algorithm, which uses the power method to approximate the eigenvectors of affinity matrix by an iterative procedure, is proposed. As the Spielman and Teng solver (ST-solver) \cite{spielman2004nearly} is a near-linear time method to solve generalized eigen-problems for a specific class of matrices, it is applied in \cite{khoa2012large} to achieve faster spectral clustering. Deep neural networks can also be used to replace eigen-decomposition. \cite{tian2014learning} discover that the reconstruction objective of auto-encoder is closely related to the graph-construction nature of eigen-decomposition. They specifically implement a sparse auto-encoder to speed up spectral clustering.

Methods in the other category rely on the sampling strategy. The main idea is to reduce the data size by sampling and then approximate the result accordingly. To approximately perform large-scale spectral clustering, \cite{yan2009fast} utilize $k$-means clustering to find the centers of $p$ groups and then carry out spectral clustering only on the $p$ centers. A slightly different strategy is presented in \cite{shinnou2008spectral}. They propose to reduce the data size by applying $k$-means clustering to find $p$ centers and removing the data points close to these centers. Spectral clustering is then performed on the remaining data points for the final result. \cite{chen2011large} use $k$-means clustering or random selection to choose $p$ landmark points and construct a sparse representation matrix by computing the similarity between raw data and landmark points. The eigen-decomposition is approximated by singular value decomposition over the resulting sparse representation. More recently, a similar approach can be found in \cite{li2016scalable}, where they first select $p$ so-called anchor points and then construct a bipartite graph with two types of nodes corresponding to the raw data and anchor points, respectively. Singular value decomposition is also applied to approximate eigen-decomposition.

\subsection{Multi-View Spectral Clustering}
\label{ssec:multi-view}
An intuitive way for achieving multi-view spectral clustering is to find an optimal linear or non-linear combination of different views. \cite{huang2012affinity} combine the affinity matrices obtained from different views by optimizing the weights. The proposed method in \cite{guo2014multiple} aggregates the normalized Laplacian matrices from each view through multiple kernel learning. \cite{xia2014robust} employ a probabilistic model to fuse the normalized Laplacian matrices from multiple views.

Other research efforts leverage with the subspace learning framework to tackle multi-view spectral clustering. Specifically, multi-view subspace learning assumes the existence of a shared latent representation for reconstructing all views. The goal of learning is to recover the latent space so that the data can be described by a lower-dimensional representation. Relevant algorithms in this class include, \eg, \cite{liu2013multi,zhang2015low} and \cite{cao2015diversity}. Both methods derive a low-rank representation for data points via subspace learning and then construct an affinity matrix for multi-view spectral clustering.

It has been noticed that techniques established based on the co-training/co-regularization paradigm for multi-view spectral clustering have gained great success.  The main idea of co-training is to construct separate learners and minimize the disagreement between them, while the co-regularization can be seen as a regularized version of co-training \cite{sindhwani2005co}. In \cite{kumar2011cor}, they employ two co-regularization strategies, pairwise and centroid regularization, to develop two multi-view spectral clustering schemes. Other approaches related to co-regularization include \cite{cai2011heterogeneous} and \cite{wang2014multi}. More relevant to our approach, \cite{kumar2011co} use co-training to make the affinity matrices to agree with each other, and then perform spectral clustering on the consensus view, which is believed to be the most informative view a priori. However, in practice, we do not always have such prior knowledge. Moreover, the co-training procedure may converge to a unified but compromised one. To overcome such disadvantages, \cite{lee2016guided} propose an iterative process of guided co-training, which first constructs an augmented view and then use it to separately guide the improvement of each view.

%
\section{Our Method}
\label{sec:method}
%

%
\subsection{Sparse Affinity Matrix Construction}
\label{ssec:sparse}
As we have mentioned, one direction of reducing the computational cost of large-scale spectral clustering is using a sampling technique to approximate the affinity matrices and the corresponding eigen-decomposition. Our approach follows this line to extract the representative information from the affinity matrix for each view. For $n$ given data points, the idea is to sample $p$ (representative) examples, which are usually named as landmark points, to capture the underlying manifold structure. Among the various sampling techniques, we choose to use $k$-medoids clustering to generate the $p$ landmark points because studies suggest that lightweight clustering methods often produce {\em better} landmark points \cite{wang2009approximate,chen2011large,kumar2012sampling}.

However, with multi-view data, we could not simply perform $k$-medoids clustering on each individual view in that it would generate different landmark points for different views and the co-training process becomes unfeasible. Analogous to the scheme used in \cite{li2015large}, the data clustering is carried out over the concatenated features from all views. More precisely, we denote the data matrix under view $v \in \{1, \dots, V\}$ as
$X^v = [\bx_1^v, \dots, \bx_n^v ] \in \bbR^{d_v \times n}$ and use $X = X^1 \oplus \cdots \oplus X^V \in \bbR^{d \times n}$ to express the data matrix combining all views where $d = \sum_{v=1}^V d_v$ and $\oplus$ symbolizes feature concatenation by column-wise stacking. We carry out $k$-medoids clustering over $X$ to find $p$ centers to form the set of landmark points, denoted as $M=\{\bx_{m_j}\}_{j=1}^p$ where $1\leq m_j \leq n$ is the index of the $j$th landmark point in the raw dataset.

After the $p$ landmark points are selected, we can derive the sparse representation matrix $Z^v \in \bbR^{n\times p}$ for $v = 1,\ldots,V$ by constructing a $q$-NN graph ($q < p$)  between the raw data points and landmark points with the edge weight defined by
\begin{equation}\label{knn}
z^{v}_{ij} = \begin{cases}
	\exp\left(\frac{-\|\bx^v_i - \bx^v_{m_j}\|^2}{2 \sigma^2}\right), & m_j\in \Phi_i,\\
    0, & \text{Otherwise,}
    \end{cases}
\end{equation}
\noindent where $\sigma$ is the band-width parameter and  $\Phi_i$ denotes the set of (raw data) indices of the $q$ nearest neighbors of $\bx_i$ in the landmark set $M$. Following \cite{chen2011large}, we further divide each $z^v_{ij}$ by the corresponding row sum. That is,  each row of the resulting $Z^v$ will now sum to one. Let $D^v \in \bbR^{p\times p}$ be a diagonal matrix with diagonal elements comprising the column sums of $Z^v$, we can then finalize the {\em normalized} sparse representation matrix $\hat{Z}^v$ by
\begin{equation}
\hat{Z}^v = Z^v \left(D^v \right)^{-1/2}.
\label{normalization}
\end{equation}

\subsection{Augmented View}
\label{ssec:augmented}
In \cite{lee2016guided}, a guided co-training approach is established for multi-view spectral clustering. The crux of their method is the use of an {\em augmented view}. While the construction the additional view has been justified in a heuristic way, it is not clear how to generalize the technique to handle large-scale data. We instead give a slightly different definition of the augmented view, and also a theoretical justification to support its usefulness for the large-scale situation.

With (\ref{normalization}), we define the (approximate) graph Laplacian of the $v$th view, $1\le v \le V$, by
\begin{equation}
L^v = \hat{Z}^v \left(\hat{Z}^v \right)^T.
\label{eqn:laplacian}
\end{equation}
Suppose that the multi-view spectral clustering is to divide the data into $k \ll p$ clusters. We first express
the Singular Value Decomposition (SVD) of $\hat{Z}^v$ by
\begin{equation}
\hat{Z}^v = \hat{U}^v \Sigma^v (\hat{V}^v )^T
\label{eqn:svd}
\end{equation}
where $\Sigma^v = \mathrm{diag}(\sigma^v_1,\ldots,\sigma^v_p) \in \bbR^{p \times p}$ with $\sigma^v_1\geq \sigma^v_2\geq\ldots\geq\sigma^v_p$, $\hat{U}^v = [u_1,\ldots,u_p]\in\bbR^{n\times p}$ and $\hat{V}^v = [v_1,\ldots,v_p] \in \bbR^{p\times p}$. Then, the {\em largest} $k$ eigenvectors of graph Laplacian $L^v$ are given by the first $k$ columns of $\hat{U}^v$, which form the reduced representation matrix, denoted as $U^v = [u_1,\ldots,u_k] \in \bbR^{n \times k}$, of the $v$th view. Our goal is to learn an augmented view whose reduce representation encompasses as much information from  the $U^1,\ldots,U^V$ as possible. To further elaborate how the augmented view is constructed, we need the following definition.

\begin{definition}
With (\ref{eqn:laplacian}) and (\ref{eqn:svd}), we denote the approximate affinity matrix for the $v$th view as $A^v$, $1 \le v \le V$, which can be defined based on the corresponding reduced representation $U^v$ by $A^v = U^v (U^v)^T \in \bbR^{n\times n}$.
\end{definition}

Let $A^* = U^* (U^*)^T$ be the augmented view we seek to establish, where $U^* \in \bbR^{n \times k}$ is the corresponding reduced representation matrix. It is preferable to construct $A^*$ such that the augmented view best ``balances'' the disagreement between all views. That is, we consider constructing $A^* = U^* (U^*)^T$ by minimizing the following objective:
\begin{equation}
A^* =U^* (U^*)^T= \arg\min_{A=UU^T} \sum\nolimits_{v=1}^V\left\|A - A^v\right\|_F^2
\label{eqn:A}
\end{equation}
\noindent where we use the Frobenius norm to determine the (squared) distance between the augmented view and the $v$th view.

We now show that solving (\ref{eqn:A}) is equivalent to finding a reduced representation $U^*$ that accounts for as much information from those of the $V$ views. Since $A^* = U^* (U^*)^T$, the augmented view can be determined by
\begin{equation}
U^* = \arg\min_{U^TU = I}\sum\nolimits_{v=1}^V\left\|UU^T - U^v (U^v)^T\right\|_F^2.
\label{centroid}
\end{equation}

\begin{algorithm}[th]
\label{algo1}
  \SetAlgoLined
  \SetKwData{Left}{left}\SetKwData{This}{this}\SetKwData{Up}{up}
  \SetKwFunction{Union}{Union}\SetKwFunction{FindCompress}{FindCompress}
  \SetKwInOut{Input}{input}\SetKwInOut{Output}{output}

  \Input{Multi-view data $X^v \in\bbR^{d_v \times n}$ for $1\leq v\leq V$;\\
         Number of clusters $k$;\\
         Number of Landmarks $p$;\\
         Number of nearest neighbors $q$.}
  \Output{$k$ clusters}
  \Begin{
  1. Get $p$ landmark points by $k$-medoids sampling.\\
  2. Construct $\{Z^v \in \bbR^{n\times p}\}_{v=1}^V$ by (\ref{knn}). \\
  \Repeat{Converge}{
    3. Compute the normalized $\{ \hat{Z}^v\}_{v=1}^V$ by (\ref{normalization}).\\

    4. Compute $U^*$ and $A^*$ by solving (\ref{centroid}).\\

    5. Construct ${\tilde A}^*$ by (\ref{sub_aff}).\\
    	\For{$v=1$ to $V$}{
	    \[
    	Z^v \leftarrow {\tilde A}^* \odot Z^v
	    \]
    	}
    }
    6. Run $k$-means clustering on the final $U^*$.
  }
\caption{Guided Co-training for Large-Scale Multi-View Spectral Clustering}
\end{algorithm}

\begin{theorem}
Let $\tilde{U} = \left[ U^1 \ \  U^2 \ \ \cdots \ \ U^V\right]\in\mathbb{R}^{n\times Vk}$, then the solution of (\ref{centroid}) is exactly the $k$ largest left-singular vectors of $\tilde{U}$.
\end{theorem}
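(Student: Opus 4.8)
The plan is to convert the Frobenius-norm objective in (\ref{centroid}) into a trace-maximization problem and then invoke a standard Ky Fan / Rayleigh--Ritz argument. First I would expand each summand using $\|M\|_F^2 = \mathrm{tr}(MM^T)$ together with the symmetry of the projection-like matrices $UU^T$ and $U^v(U^v)^T$, obtaining
\[
\left\|UU^T - U^v(U^v)^T\right\|_F^2 = \mathrm{tr}(UU^TUU^T) - 2\,\mathrm{tr}(UU^TU^v(U^v)^T) + \mathrm{tr}(U^v(U^v)^TU^v(U^v)^T).
\]
The key simplification comes from the orthonormality constraints. The feasibility condition $U^TU = I$ gives $UU^TUU^T = UU^T$ and $\mathrm{tr}(UU^T) = k$, while the columns of each $U^v$ are likewise orthonormal (they are the first $k$ columns of the thin left factor $\hat{U}^v$ of the SVD in (\ref{eqn:svd})), so $\mathrm{tr}(U^v(U^v)^TU^v(U^v)^T) = k$ as well. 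Hence the first and third terms of every summand are constants over the feasible set, and minimizing the objective is equivalent to \emph{maximizing} $\sum_{v=1}^V \mathrm{tr}(UU^TU^v(U^v)^T)$.

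Next I would use the cyclic property of the trace to rewrite this quantity as $\mathrm{tr}\!\left(U^T\left(\sum_{v=1}^V U^v(U^v)^T\right)U\right)$, and then observe the central identity $\sum_{v=1}^V U^v(U^v)^T = \tilde{U}\tilde{U}^T$, which follows immediately from the block structure $\tilde{U} = [\,U^1\ \cdots\ U^V\,]$. The problem thus collapses to
\[
U^* = \arg\max_{U^TU = I} \mathrm{tr}\!\left(U^T \tilde{U}\tilde{U}^T U\right).
\]

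Finally, I would apply the Ky Fan theorem (equivalently, the Rayleigh--Ritz characterization for a symmetric positive semidefinite matrix): the maximum of $\mathrm{tr}(U^TMU)$ over all $U\in\bbR^{n\times k}$ with $U^TU = I$ equals the sum of the $k$ largest eigenvalues of $M$ and is attained when the columns of $U$ span the top-$k$ eigenspace of $M$. Taking $M = \tilde{U}\tilde{U}^T$ and recalling that the eigenvectors of $\tilde{U}\tilde{U}^T$ associated with its largest eigenvalues are precisely the left-singular vectors of $\tilde{U}$ associated with its largest singular values, I conclude that $U^*$ consists of the $k$ largest left-singular vectors of $\tilde{U}$, as claimed.

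I expect the main (though still mild) obstacle to be the clean justification of the last step rather than the algebra: one must argue that a \emph{global} maximizer of the trace over the Stiefel manifold is furnished by the top eigenvectors, and note that this maximizer is determined only up to a $k\times k$ orthogonal rotation (and up to degeneracy in the singular-value spectrum). Any such choice nonetheless yields the same $A^* = U^*(U^*)^T$, so the augmented view is well defined. The orthonormality of the $U^v$, which is what collapses two of the three trace terms to constants, is the fact that makes the reduction to a pure trace maximization possible in the first place.
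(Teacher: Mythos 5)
Your proof is correct and follows essentially the same route as the paper's: expand the squared Frobenius norm into traces, discard the constant terms, rewrite the cross term as $\mathrm{tr}\bigl(U^T\tilde{U}\tilde{U}^TU\bigr)$ using the block identity $\sum_{v=1}^V U^v(U^v)^T=\tilde{U}\tilde{U}^T$, and conclude by Rayleigh--Ritz (Ky Fan). If anything you are slightly more careful than the paper, which silently relies on $U^TU=I$ and the orthonormality of the columns of each $U^v$ to make the first and third trace terms constant; your explicit justification of those facts, and your remark that the maximizer is unique only up to a $k\times k$ orthogonal rotation yet still yields the same $A^*=U^*(U^*)^T$, tighten the argument without changing it.
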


\begin{proof}
From the fact $\|B\|_F^2 = tr(BB^T)$ and the cyclicity property of matrix trace, we have
\begin{align}
  & \arg\min_{U}\sum_{v=1}^V\left\|UU^T - U^v (U^v)^T\right\|_F^2 \notag\\
= & \arg\min_{U}\sum_{v=1}^V\left(\|U^v\|_F^2 + \|U\|_F^2 - 2tr\left(U^v(U^v)^TUU^T\right)\right) \notag\\
= & \arg\max_{U}\sum_{v=1}^V\, tr\left(U^v(U^v)^TUU^T\right) \notag\\
= & \arg\max_{U}\, tr\left(U^T\left(\sum_{v=1}^VU^v(U^v)^T\right)U\right)
\end{align}
From the construction, we have $\sum_{v=1}^VU^v(U^v)^T = \tilde{U} \tilde{U}^T$. By the Rayleigh-Ritz theorem, the solution can be obtained by computing the $k$ largest eigenvectors of $\tilde{U} \tilde{U}^T$, which are exactly the $k$ largest left-singular vectors of $\tilde{U}$. In other words, the reduced representation $U^*$ of the optimal augmented view comprises the $k$ largest left-singular vectors of $\tilde{U}$, and is also the best rank-$k$ approximation to $\tilde{U}$.
\end{proof}

\subsection{Guided Co-training}
\label{ssec:guided}
With $A^* \in \bbR^{n\times n}$ of the augmented view so defined, we are ready to proceed the large-scale guided co-training to update the sparse representation matrix $Z^v \in \bbR^{n\times p}$ defined in (\ref{knn}).  Since the $(i,j)$ entry of $Z^v$ is the similarity between data point $\bx_i$ and the $j$th landmark point, we extract a submatrix ${\tilde A}^* \in\bbR^{n\times p}$ from $A^*$ by letting
\begin{equation}\label{sub_aff}
{\tilde A}^*_{ij} = A^*_{i m_j}, \quad j = 1,\ldots,p,
\end{equation}
\noindent where $m_j$ is the index of the $j$th landmark point in the original dataset. Then, we replace $Z^v$ of view $v=1,\ldots,V$ with ${\tilde A}^*\odot Z^v$, where $\odot$ denotes the {\it Hadamard product}.

Having updated all the sparse representation matrices $Z^v, v=1, \dots, V$, we again use the same method to compute $A^*$ and iterate the process until the change of $A^*$ is insignificant (or reaching a pre-specified number of iterations). We summarize the whole algorithm in Algorithm 1.

\subsection{Complexity Analysis}
\label{ssec:complexity}
We compare the proposed algorithm with the original guided co-training \cite{lee2016guided}. Assume $n$ data points of $V$ views, denoted as $X^v \in \bbR^{d_v \times n}, v=1,\dots, V$, are given. First, using $k$-medoids clustering to select $p$ landmark points takes $\mathcal{O}(t_1npd)$ where $t_1$ is the number of iterations of $k$-medoids clustering and $d = \sum_{v=1}^V d_v$. The second step is to construct affinity matrix for each view. In our case, it costs $\mathcal{O}(npd_v)$ for the $v$th view, while the original method needs $\mathcal{O}(n^2d_v)$. Therefore, in this step, the total cost of our algorithm is $\mathcal{O}(npd)$ and the original requires $\mathcal{O}(n^2d)$.

In the main loop, there are two major steps: construct the augmented view and update the sparse representation matrix for each view. The construction of augmented view includes two sub-steps: performing singular value decomposition of each original view and computing the representation of the augmented view. To complete the first sub-step, the original algorithm needs $\mathcal{O}(n^3V)$ operations comparing to $\mathcal{O}(p^2nV)$ operations for our method. Both approaches finish computing the representation of augmented view with $\mathcal{O}(nk^2V^2)$ operations. Because $n\gg k,V$ in most cases, the cost of constructing the augmented view is $\mathcal{O}(n^3V)$ for the original method comparing to $\mathcal{O}(p^2nV)$ required by ours. Finally, since the number of operations to update a view equals the number of entries in the affinity matrix. Thus, the computational complexity of the last step of the original guided co-training and that of ours are $\mathcal{O}(n^2V)$ and $\mathcal{O}(pnV)$, respectively. We summarize the step-wise computational costs in Table~\ref{summary_conplexity}.

\setlength{\tabcolsep}{4pt}
\begin{table}
\begin{center}
\caption{
Time complexity of each step.
}
\label{summary_conplexity}
\begin{tabular}{lll}
\Xhline{2\arrayrulewidth}
Step $\qquad\qquad$& Original & Ours\\
\Xhline{2\arrayrulewidth}
\noalign{\smallskip}
Sampling landmark points & - & $\mathcal{O}(t_1npd)$ \\
Affinity matrix construction & $\mathcal{O}(n^2d)$ & $\mathcal{O}(npd)$ \\
Augmented view construction & $\mathcal{O}(n^3V)$ & $\mathcal{O}(p^2nV)$ \\
Updating views & $\mathcal{O}(n^2V)$ & $\mathcal{O}(pnV)$ \\
\hline
\end{tabular}
\end{center}
\end{table}
\setlength{\tabcolsep}{1.4pt}

%
\section{Experimental Results}
\label{sec:experiment}
%

We carry out experiments on five benchmark datasets to evaluate the usefulness of the proposed algorithm and  demonstrate the effectiveness of our approach by comparing with other spectral clustering algorithms.

\setlength{\tabcolsep}{4pt}
\begin{table*}
\begin{center}
\caption{
Summary of the multi-view datasets.
}
\label{datasets}
\begin{tabular}{p{2.2cm} p{1.8cm} p{2.5cm} p{2.5cm} p{2.5cm} p{2.5cm}}
\Xhline{2\arrayrulewidth}
No. & UCI & USPS & MNIST & NUS & Reuters \\
\Xhline{2\arrayrulewidth}
\noalign{\smallskip}
1 & FAC (216) & Pixel (256) & Pixel (784) & CH (65) & English (21531) \\
2 & FOU (76) & GIST (512) & GIST (512) & CM (226) & France (24893) \\
3 & KAR (64) & HOG (81) & HOG (81) & CORR (145) & German (34279) \\
4 & MOR (6) & - & - & EDH (74) & Italian (15506) \\
5 & PIX (240) & - & - & WT (129) & Spanish (11547) \\
6 & ZER (47) & - & - & - & - \\
\hline
\# of data & 2000 & 11000 & 70000 & 30000 & 18758\\
\# of classes & 10 & 10 & 10 & 31 & 6 \\
\hline
\end{tabular}
\end{center}
\end{table*}
\setlength{\tabcolsep}{1.4pt}

\subsection{Evaluation Metrics}
\label{ssec:evaluation}
We evaluate the clustering results by two standard measures: clustering accuracy (ACC) and normalized mutual information (NMI). For both measures, a higher value indicates a better clustering result. Because of the randomness of $k$-means clustering, the reported value is obtained by averaging the results of 10 tests.

\subsection{Datasets}
\label{ssec:datasets}
Five popular datasets are considered in our experiments. They are described below and summarized in Table~\ref{datasets}.

{\bf UCI Digit (UCI)\footnote{https://archive.ics.uci.edu/ml/datasets/Multiple+Features}:} The dataset is established by extracting 2,000 handwritten numerals from a collection of Dutch utility maps \cite{Lichman:2013}. It consists of 10 categories and each category contains 200 samples. Each sample is characterized by 6 different features: (1) Fourier coefficients of the character shapes; (2) profile correlations; (3) Karhunen-Love coefficients; (4) pixel averages in $2\times3$ windows; (5) Zernike moments; and (6) morphological features. We use all six features for our experiment.

{\bf USPS\footnote{http://www.cs.nyu.edu/~roweis/data.html}:} The dataset is composed of 11,000 8-bit grayscale $16\times16$ images of handwritten digits collected from envelopes by the U.S. Postal Service. We use three different features: (1) The original 256-D grayscale vector; (2) 512-D GIST; (3) 81-D HOG. We adopt the Python package scikit-image to calculate HOG and the package provided by Oliva and Torralba\footnote{http://people.csail.mit.edu/torralba/code/spatialevnelope} to produce the GIST feature.

{\bf MNIST:} This is one of the most popular datasets in computer vision, which has a training set of 60,000 examples, and a test set of 10,000 examples. Each example is a $28\times 28$ image of handwritten digit. Like in dealing with USPS, we consider the 784-D grayscale vector, HOG and GIST. All 70,000 data instances are used in our experiment.

{\bf NUS-WIDE-Object (NUS):} The collection is a lite version of the NUS-WIDE dataset, which is created by Lab for Media Search in National University of Singapore \cite{chua2009nus}. It consists of 30,000 web images belonging to 31 categories. The website\footnote{http://lms.comp.nus.edu.sg/research/NUS-WIDE.htm} offers 5 pre-computed different features: (1) color Histogram; (2) color moment; (3) color correlation; (4) edge distribution; and (5) wavelet texture. All the available features are used in our experiment.

{\bf Reuters Multilingual Text (Reuters):} \cite{amini2009learning} construct this dataset by sampling parts of the Reuters RCV1 and RCV2 \cite{lewis2004rcv1} collections for newswire articles written in 5 languages (English, French, German, Italian and Spanish) and their translations. The documents are made available as feature characteristics in a "bag of words" format from the website\footnote{http://tinyurl.com/z8yr6wf}. Analogous to \cite{kumar2011cor}, we use
Latent Semantic Analysis (LSA) \cite{hofmann1999probabilistic} for dimensionality reduction and reduce the feature dimension to 1500-D.

\subsection{Baseline Algorithms}
\label{ssec:baseline}
We briefly describe the baseline algorithms included in the performance evaluations of our experiment.

{\bf Single Feature Type Landmark-based Spectral Clustering (LSC(\#)):} The landmark-based spectral clustering is an accelerated version of spectral clustering which handles only one single view \cite{chen2011large}. We treat this algorithm as a baseline rather than regular spectral clustering in that the single-view results by LSC can help evaluate how much our method improves the clustering performance with multi-view information. In this experiment, we sample 800 landmark points and construct the representation by setting the number of nearest neighbors as 8 for all five datasets.

{\bf Feature Concatenation Landmark-based Spectral Clustering (ConcatLSC):} We concatenate the features from all views and run the landmark-based spectral clustering (\ie, LSC) on each dataset. The settings, namely, number of landmark points and number of nearest neighbors, are the same as those described above for LSC.

{\bf Multi-view Spectral Clustering via Bipartite Graph (MVSC-B):} To the best of our knowledge, this is the first multi-view spectral clustering algorithm, which is designed for large-scale data \cite{li2015large}. MVSC-B builds a bipartite graph between raw data points and the so-called salient points for Note that the hyper-parameter $r$ in the algorithm is set to $2$ in our experiment.

\subsection{Clustering Performance}
\label{ssec:performance}
The clustering performance of baseline algorithms and our method are lreported in Table \ref{NMI} and Table \ref{ACC}. For fairness, the outcomes of our method and MVSC-B are using Gaussian kernel to measure the similarity with the width parameter being the median among all pair-wise Euclidean distances. Both method are carried out with 600 landmark points, while the number of nearest neighbors is set to 8. It can be readily observed that the best performance, listed in bold digit, are obtained by our method.

We note that the results by MVSC-B are slightly different from those reported in \cite{li2015large}. The differences could be due to two possible factors: (1) The pre-processing of data. We use LSA to pre-process the Reuters dataset. In our experiment, simply using the raw data of Reuters will yield poor results for all methods and the pre-processing is necessary. \cite{kumar2011cor} have observed a similar phenomenon, and also consider performing dimensionality reduction an essential pre-processing for dealing with this dataset. However, we do not know the details of experimental setting conducted in \cite{li2015large}. (2) Different hyper-parameter values. There are some hyper-parameters that are not specified in \cite{li2015large}, including the bandwidth of Gaussian kernel and the power constant $r$ of the weights. Still, our algorithm yields better results under the circumstance that both methods use the same number of landmark points and the same number of nearest neighbors.

We also investigate the impact of using different numbers of landmark points to our method. The outcomes are displayed in Figure~\ref{fig:ACC_NMI}. Generally speaking, the more the landmark points are generated, the better the clustering performance is. However, it is insightful to point out that while the similar phenomenon happens in other techniques \cite{wang2009approximate,chen2011large}, the variance of the outcomes is indeed small. Thus, we indeed do not need a very large set of landmark points to get decent clustering results.

\setlength{\tabcolsep}{4pt}
\begin{table}
\begin{center}
\caption{
NMI by different clustering schemes.
}
\label{NMI}
\begin{tabular}{lccccc}
\Xhline{2\arrayrulewidth}
Method & UCI & USPS & MNIST & NUS & Reuters\\
\Xhline{2\arrayrulewidth}
\noalign{\smallskip}
SC(1) & 0.747 & 0.583 & 0.612 & 0.073 & 0.243\\
    SC(2) & 0.759 & 0.740 & 0.583 & 0.091 & 0.255\\
    SC(3) & 0.867 & 0.597 & 0.641 & 0.099 & 0.263\\
    SC(4) & 0.506 & - & - & 0.112 & 0.271\\
    SC(5) & 0.879 & - & - & 0.096 & 0.332\\
    SC(6) & 0.691 & - & - & - & - \\
    \hline
    ConcatLSC & 0.716 & 0.601 & 0.669 & 0.127 & 0.301 \\
    MVSC-B & 0.893 & 0.765 & 0.708 & 0.149 & 0.335\\
    Ours & \bf{0.928} & \bf{0.772} & \bf{0.753} & \bf{0.157} & \bf{0.337}\\
    \hline
    \end{tabular}
\end{center}
\end{table}
\setlength{\tabcolsep}{1.4pt}

\setlength{\tabcolsep}{4pt}
\begin{table}
\begin{center}
\caption{
Accuracy by different clustering schemes.
}
\label{ACC}
\begin{tabular}{lccccc}
\Xhline{2\arrayrulewidth}
Method & UCI & USPS & MNIST & NUS & Reuters\\
\Xhline{2\arrayrulewidth}
\noalign{\smallskip}
    SC(1) & 0.744 & 0.542 & 0.591 & 0.140 & 0.341\\
    SC(2) & 0.776 & 0.696 & 0.549 & 0.131 & 0.425\\
    SC(3) & 0.930 & 0.593 & 0.608 & 0.142 & 0.445\\
    SC(4) & 0.376 & - & - & 0.168 & 0.364\\
    SC(5) & 0.940 & - & - & 0.140 & 0.449\\
    SC(6) & 0.725 & - & - & - & -\\
    \hline
    ConcatLSC & 0.758 & 0.555 & 0.697 & 0.184 & 0.472 \\
    MVSC-B & 0.945 & 0.726 & 0.740 & 0.173 & 0.502\\
    Ours & \bf{0.967} & \bf{0.738} & \bf{0.754} & \bf{0.189} & \bf{0.508}\\
    \hline
    \end{tabular}
\end{center}
\end{table}
\setlength{\tabcolsep}{1.4pt}

\begin{figure}[th]
\centering
\begin{tabular} {c}
\includegraphics[width=0.86\linewidth]{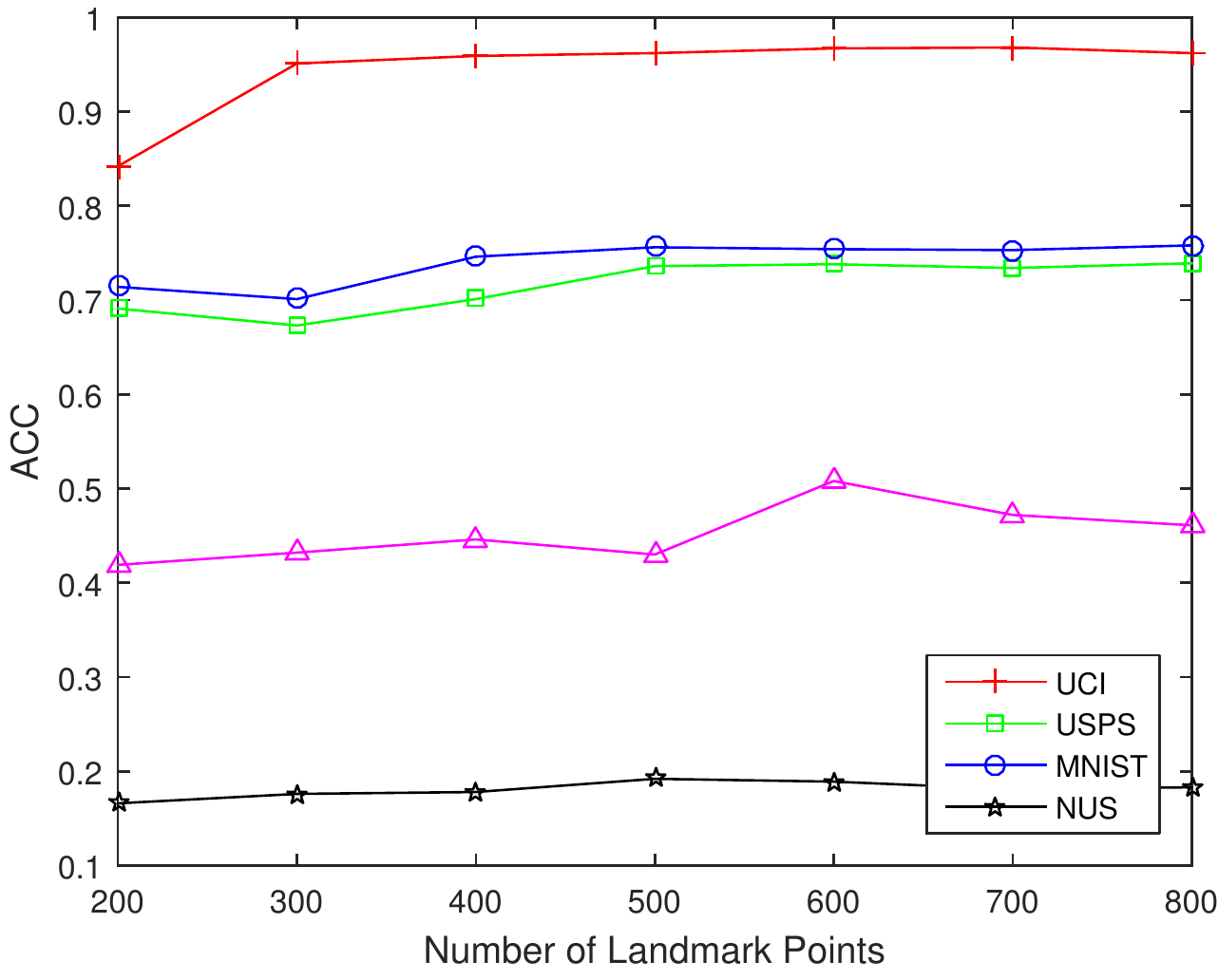} \\
(a) ACC \\
\end{tabular}
\caption{
Performances versus number of landmark points.
}
\label{fig:ACC_NMI}
\end{figure}

%
\section{Conclusion}
\label{sec:conclusion}
%
We have introduced a large-scale extension to the guided co-training approach to tackling multi-view spectral clustering. To divide $n$ given multi-view data points into $k$ clusters, we sample $p$ $(\ll n)$ landmark points (common to all views) to approximate the eigen-decomposition and then construct the augmented view. The main result of this work is to justify the equivalence between computing the approximate affinity matrix of the optimal augmented view and constructing the reduced representation matrix (of rank $k$) to include the most information from all views. The relatedness between the two tasks supports that the augmented view is indeed informative, and has the advantage to   guide the improvements of other original views. As a result, we are able to achieve state-of-the-art clustering performances in five popular benchmark datasets. For future work, we would focus on developing deep-net models to mimic the function of the augmented view and establishing a new architecture to implement the large-scale guided co-training strategy.

\bibliographystyle{IEEEtran}
\bibliography{guided_arXiv}

\begin{thebibliography}{10}
\providecommand{\url}[1]{#1}
\csname url@samestyle\endcsname
\providecommand{\newblock}{\relax}
\providecommand{\bibinfo}[2]{#2}
\providecommand{\BIBentrySTDinterwordspacing}{\spaceskip=0pt\relax}
\providecommand{\BIBentryALTinterwordstretchfactor}{4}
\providecommand{\BIBentryALTinterwordspacing}{\spaceskip=\fontdimen2\font plus
\BIBentryALTinterwordstretchfactor\fontdimen3\font minus
  \fontdimen4\font\relax}
\providecommand{\BIBforeignlanguage}[2]{{%
\expandafter\ifx\csname l@#1\endcsname\relax
\typeout{** WARNING: IEEEtran.bst: No hyphenation pattern has been}%
\typeout{** loaded for the language `#1'. Using the pattern for}%
\typeout{** the default language instead.}%
\else
\language=\csname l@#1\endcsname
\fi
#2}}
\providecommand{\BIBdecl}{\relax}
\BIBdecl

\bibitem{shi2000normalized}
J.~Shi and J.~Malik, ``Normalized cuts and image segmentation,'' \emph{Pattern
  Analysis and Machine Intelligence, IEEE Transactions on}, vol.~22, no.~8, pp.
  888--905, 2000.

\bibitem{ng2002spectral}
A.~Y. Ng, M.~I. Jordan, Y.~Weiss \emph{et~al.}, ``On spectral clustering:
  Analysis and an algorithm,'' \emph{Advances in neural information processing
  systems}, vol.~2, pp. 849--856, 2002.

\bibitem{von2007tutorial}
U.~Von~Luxburg, ``A tutorial on spectral clustering,'' \emph{Statistics and
  computing}, vol.~17, no.~4, pp. 395--416, 2007.

\bibitem{filippone2008survey}
M.~Filippone, F.~Camastra, F.~Masulli, and S.~Rovetta, ``A survey of kernel and
  spectral methods for clustering,'' \emph{Pattern recognition}, vol.~41,
  no.~1, pp. 176--190, 2008.

\bibitem{cai2011heterogeneous}
X.~Cai, F.~Nie, H.~Huang, and F.~Kamangar, ``Heterogeneous image feature
  integration via multi-modal spectral clustering,'' in \emph{Computer Vision
  and Pattern Recognition (CVPR), 2011 IEEE Conference on}.\hskip 1em plus
  0.5em minus 0.4em\relax IEEE, 2011, pp. 1977--1984.

\bibitem{huang2012affinity}
H.-C. Huang, Y.-Y. Chuang, and C.-S. Chen, ``Affinity aggregation for spectral
  clustering,'' in \emph{Computer Vision and Pattern Recognition (CVPR), 2012
  IEEE Conference on}.\hskip 1em plus 0.5em minus 0.4em\relax IEEE, 2012, pp.
  773--780.

\bibitem{guo2014multiple}
D.~Guo, J.~Zhang, X.~Liu, Y.~Cui, and C.~Zhao, ``Multiple kernel learning based
  multi-view spectral clustering,'' in \emph{Pattern recognition (ICPR), 2014
  22nd international conference on}.\hskip 1em plus 0.5em minus 0.4em\relax
  IEEE, 2014, pp. 3774--3779.

\bibitem{zhang2015low}
C.~Zhang, H.~Fu, S.~Liu, G.~Liu, and X.~Cao, ``Low-rank tensor constrained
  multiview subspace clustering,'' in \emph{2015 IEEE International Conference
  on Computer Vision (ICCV)}.\hskip 1em plus 0.5em minus 0.4em\relax IEEE,
  2015, pp. 1582--1590.

\bibitem{cao2015diversity}
X.~Cao, C.~Zhang, H.~Fu, S.~Liu, and H.~Zhang, ``Diversity-induced multi-view
  subspace clustering,'' in \emph{Proceedings of the IEEE Conference on
  Computer Vision and Pattern Recognition}, 2015, pp. 586--594.

\bibitem{kumar2011co}
A.~Kumar and H.~Daum{\'e}, ``A co-training approach for multi-view spectral
  clustering,'' in \emph{Proceedings of the 28th International Conference on
  Machine Learning (ICML-11)}, 2011, pp. 393--400.

\bibitem{kumar2011cor}
A.~Kumar, P.~Rai, and H.~Daume, ``Co-regularized multi-view spectral
  clustering,'' in \emph{Advances in neural information processing systems},
  2011, pp. 1413--1421.

\bibitem{wang2014multi}
H.~Wang, C.~Weng, and J.~Yuan, ``Multi-feature spectral clustering with minimax
  optimization,'' in \emph{Proceedings of the IEEE Conference on Computer
  Vision and Pattern Recognition}, 2014, pp. 4106--4113.

\bibitem{lee2016guided}
C.-K. Lee and T.-L. Liu, ``Guided co-training for multi-view spectral
  clustering,'' in \emph{IEEE International Conference on Image Processing
  (ICIP)}, 2016.

\bibitem{chung1997spectral}
F.~R. Chung, \emph{Spectral graph theory}.\hskip 1em plus 0.5em minus
  0.4em\relax American Mathematical Soc., 1997, vol.~92.

\bibitem{lin2010power}
F.~Lin and W.~W. Cohen, ``Power iteration clustering,'' in \emph{Proceedings of
  the 27th international conference on machine learning (ICML-10)}, 2010, pp.
  655--662.

\bibitem{boutsidis2013spectral}
C.~Boutsidis, A.~Gittens, and P.~Kambadur, ``Spectral clustering via the power
  method--provably,'' \emph{arXiv preprint arXiv:1311.2854}, 2013.

\bibitem{fowlkes2004spectral}
C.~Fowlkes, S.~Belongie, F.~Chung, and J.~Malik, ``Spectral grouping using the
  {Nystrom} method,'' \emph{Pattern Analysis and Machine Intelligence, IEEE
  Transactions on}, vol.~26, no.~2, pp. 214--225, 2004.

\bibitem{shinnou2008spectral}
H.~Shinnou and M.~Sasaki, ``Spectral clustering for a large data set by
  reducing the similarity matrix size.'' in \emph{LREC}, 2008.

\bibitem{yan2009fast}
D.~Yan, L.~Huang, and M.~I. Jordan, ``Fast approximate spectral clustering,''
  in \emph{Proceedings of the 15th ACM SIGKDD international conference on
  Knowledge discovery and data mining}.\hskip 1em plus 0.5em minus 0.4em\relax
  ACM, 2009, pp. 907--916.

\bibitem{sakai2009fast}
T.~Sakai and A.~Imiya, ``Fast spectral clustering with random projection and
  sampling,'' in \emph{Machine Learning and Data Mining in Pattern
  Recognition}.\hskip 1em plus 0.5em minus 0.4em\relax Springer, 2009, pp.
  372--384.

\bibitem{wang2009approximate}
L.~Wang, C.~Leckie, K.~Ramamohanarao, and J.~Bezdek, ``Approximate spectral
  clustering,'' in \emph{Advances in Knowledge Discovery and Data
  Mining}.\hskip 1em plus 0.5em minus 0.4em\relax Springer, 2009, pp. 134--146.

\bibitem{chen2011large}
X.~Chen and D.~Cai, ``Large scale spectral clustering with landmark-based
  representation.'' in \emph{AAAI}, 2011.

\bibitem{shao2015deep}
M.~Shao, S.~Li, Z.~Ding, and Y.~Fu, ``Deep linear coding for fast graph
  clustering,'' in \emph{Proceedings of the 24th International Conference on
  Artificial Intelligence}.\hskip 1em plus 0.5em minus 0.4em\relax AAAI Press,
  2015, pp. 3798--3804.

\bibitem{li2016scalable}
Y.~L. Li, J.~Huang, and L.~Wei, ``Scalable sequential spectral clustering,'' in
  \emph{AAAI}, 2016.

\bibitem{tremblay2016compressive}
N.~Tremblay, G.~Puy, R.~Gribonval, and P.~Vandergheynst, ``Compressive
  clustering,'' \emph{arXiv preprint arXiv:1602.02018}, 2016.

\bibitem{li2015large}
Y.~Li, F.~Nie, H.~Huang, and J.~Huang, ``Large-scale multi-view spectral
  clustering via bipartite graph.'' in \emph{AAAI}, 2015, pp. 2750--2756.

\bibitem{spielman2004nearly}
D.~A. Spielman and S.-H. Teng, ``Nearly-linear time algorithms for graph
  partitioning, graph sparsification, and solving linear systems,'' in
  \emph{Proceedings of the thirty-sixth annual ACM symposium on Theory of
  computing}.\hskip 1em plus 0.5em minus 0.4em\relax ACM, 2004, pp. 81--90.

\bibitem{khoa2012large}
N.~L.~D. Khoa and S.~Chawla, ``Large scale spectral clustering using resistance
  distance and spielman-teng solvers,'' in \emph{Discovery Science}.\hskip 1em
  plus 0.5em minus 0.4em\relax Springer, 2012, pp. 7--21.

\bibitem{tian2014learning}
F.~Tian, B.~Gao, Q.~Cui, E.~Chen, and T.-Y. Liu, ``Learning deep
  representations for graph clustering.'' in \emph{AAAI}, 2014, pp. 1293--1299.

\bibitem{xia2014robust}
R.~Xia, Y.~Pan, L.~Du, and J.~Yin, ``Robust multi-view spectral clustering via
  low-rank and sparse decomposition.'' in \emph{AAAI}, 2014, pp. 2149--2155.

\bibitem{liu2013multi}
J.~Liu, C.~Wang, J.~Gao, and J.~Han, ``Multi-view clustering via joint
  nonnegative matrix factorization,'' in \emph{Proc. of SDM}, vol.~13.\hskip
  1em plus 0.5em minus 0.4em\relax SIAM, 2013, pp. 252--260.

\bibitem{sindhwani2005co}
V.~Sindhwani, P.~Niyogi, and M.~Belkin, ``A co-regularization approach to
  semi-supervised learning with multiple views,'' in \emph{Proceedings of ICML
  workshop on learning with multiple views}.\hskip 1em plus 0.5em minus
  0.4em\relax Citeseer, 2005, pp. 74--79.

\bibitem{kumar2012sampling}
S.~Kumar, M.~Mohri, and A.~Talwalkar, ``Sampling methods for the nystr{\"o}m
  method,'' \emph{Journal of Machine Learning Research}, vol.~13, no. Apr, pp.
  981--1006, 2012.

\bibitem{Lichman:2013}
\BIBentryALTinterwordspacing
M.~Lichman, ``{UCI} machine learning repository,'' 2013. [Online]. Available:
  \url{http://archive.ics.uci.edu/ml}
\BIBentrySTDinterwordspacing

\bibitem{chua2009nus}
T.-S. Chua, J.~Tang, R.~Hong, H.~Li, Z.~Luo, and Y.~Zheng, ``Nus-wide: a
  real-world web image database from national university of singapore,'' in
  \emph{Proceedings of the ACM international conference on image and video
  retrieval}.\hskip 1em plus 0.5em minus 0.4em\relax ACM, 2009, p.~48.

\bibitem{amini2009learning}
M.~Amini, N.~Usunier, and C.~Goutte, ``Learning from multiple partially
  observed views-an application to multilingual text categorization,'' in
  \emph{Advances in neural information processing systems}, 2009, pp. 28--36.

\bibitem{lewis2004rcv1}
D.~D. Lewis, Y.~Yang, T.~G. Rose, and F.~Li, ``Rcv1: A new benchmark collection
  for text categorization research,'' \emph{Journal of machine learning
  research}, vol.~5, no. Apr, pp. 361--397, 2004.

\bibitem{hofmann1999probabilistic}
T.~Hofmann, ``Probabilistic latent semantic indexing,'' in \emph{Proceedings of
  the 22nd annual international ACM SIGIR conference on Research and
  development in information retrieval}.\hskip 1em plus 0.5em minus 0.4em\relax
  ACM, 1999, pp. 50--57.

\end{thebibliography}
\end{document}